\documentclass{article}

\PassOptionsToPackage{numbers, compress}{natbib}




\usepackage[preprint]{neurips}

\usepackage{xcolor}         
\definecolor{sbblue}{HTML}{4878d0}
\definecolor{sbred}{HTML}{d65f5f}
\definecolor{sbgreen}{HTML}{6acc64}
\definecolor{sbbluedeep}{HTML}{4c72b0}
\definecolor{sbreddeep}{HTML}{c44e52}
\definecolor{sbgreendeep}{HTML}{55a868}

\usepackage[utf8]{inputenc} 
\usepackage[T1]{fontenc}    
\usepackage{hyperref}       
\hypersetup{
  colorlinks,
  citecolor=sbgreen,
  linkcolor=sbred,
  urlcolor=sbblue}
\usepackage{url}            
\usepackage{booktabs}       
\usepackage{amsfonts}       
\usepackage{nicefrac}       
\usepackage{microtype}      
\usepackage{enumitem}       
\usepackage{graphicx}       
\usepackage{siunitx}        
\newcommand{\sd}[1]{\textsubscript{\color{gray}{\textpm \SI[round-precision=1, round-mode=places]{#1}{}}}}
\usepackage{tikz}           
\usetikzlibrary{shapes, positioning, chains, calc, arrows.meta}
\usepackage{wrapfig}        
\usepackage{adjustbox}      

\usepackage{amsthm,xpatch}  
\newtheorem{theorem}{Theorem}
\theoremstyle{remark}

\theoremstyle{definition}
\newtheorem{definition}{Definition}
\makeatletter
\newcounter{proofpart}[proof] 
\newcommand{\proofpart}{%
  \refstepcounter{proofpart}
  \noindent\emph{Step \theproofpart.}
}
\makeatother                

\usepackage{amsmath}        
\DeclareMathOperator{\supp}{supp}
\DeclareMathOperator{\rank}{rank}

\usepackage{xspace}         
\makeatletter
\DeclareRobustCommand\onedot{\futurelet\@let@token\@onedot}
\def\@onedot{\ifx\@let@token.\else.\null\fi\xspace}

\def\eg{\emph{e.g}\onedot} 
\def\ie{\emph{i.e}\onedot}

\def\wrt{w.r.t\onedot} 

\makeatother

\title{Compositional Generalization from First Principles}

%

\author{%
    Thaddäus Wiedemer$^{1,2,3}$\footnotemark[1] \quad Prasanna Mayilvahanan$^{1,2,3}$\footnotemark[1]\\
    \\
    \textbf{Matthias Bethge}$^{1,2}$\footnotemark[2] \quad \textbf{Wieland Brendel}$^{2,3}$\footnotemark[2]\\
    \\
    $^1$University of Tübingen \quad $^2$Tübingen AI Center\\
    $^3$Max-Planck-Institute for Intelligent Systems, Tübingen\\
    \\
    {\tt\small \{thaddaeus.wiedemer, prasanna.mayilvahanan\}@uni-tuebingen.de}
}

\begin{document}
\renewcommand*{\thefootnote}{\fnsymbol{footnote}}
\footnotetext[1]{Equal contribution \quad \footnotemark[2]Equal supervision\vspace{6pt}}
\footnotetext[0]{Code available at \url{github.com/brendel-group/compositional-ood-generalization}}

\maketitle

\begin{abstract}
Leveraging the compositional nature of our world to expedite learning and facilitate generalization is a hallmark of human perception. In machine learning, on the other hand, achieving compositional generalization has proven to be an elusive goal, even for models with explicit compositional priors. To get a better handle on compositional generalization, we here approach it from the bottom up: Inspired by identifiable representation learning, we investigate compositionality as a property of the data-generating process rather than the data itself. This reformulation enables us to derive mild conditions on only the support of the training distribution and the model architecture, which are sufficient for compositional generalization. We further demonstrate how our theoretical framework applies to real-world scenarios and validate our findings empirically. Our results set the stage for a principled theoretical study of compositional generalization.
\end{abstract}

\section{Introduction}\label{sec:intro}


\textit{Systematic compositionality}~\cite{fodorConnectionismCognitiveArchitecture1988} is the remarkable ability to utilize a finite set of known components to understand and generate a vast array of novel combinations. This ability, referred to by Chomsky~\cite{chomsky2014aspects} as the ``\textit{infinite use of finite means}'', is a distinguishing feature of human cognition, enabling us to adapt to diverse situations and learn from varied experiences.

It's been a long-standing idea to leverage the compositional nature of the world for learning. In object-centric learning, models learn to isolate representations of individual objects as building blocks for complex scenes. In disentanglement, models aim to infer factors of variation that capture compositional and interpretable aspects of their inputs, for example hair color, skin color, and gender for facial data. So far, however, there is little evidence that these methods deliver substantially increased learning efficacy or generalization capabilities (\citet{schottVisualRepresentationLearning2022}, \citet{monteroRoleDisentanglementgeneralisation2022}). Across domains and modalities, machine learning models still largely fail to capture and utilize the compositional nature of the training data (\citet{lake2018generalization, loula2018rearranging, keysers2019measuring}).


To exemplify this failure, consider a model trained on a data set with images of two sprites with varying position, size, shape, and color overlaid on a black canvas. Given the latent factors, a simple multi-layer neural network can easily learn to reconstruct images containing \textit{compositions} of these sprites that were covered by the training set (Figure~\ref{fig:motivation}, top rows). However, reconstruction fails for novel compositions---even if the individual \textit{components} have been observed before (Figure~\ref{fig:motivation}, bottom row). Failure to generalize to unseen data in even this simplistic regression setting demonstrates that \textit{compositional generalization} does not automatically emerge simply because the data is of a compositional nature.

We therefore take a step back to formally study compositionality and understand what conditions need to be fulfilled for compositional generalization to occur. To this end, we take inspiration from identifiable representation learning and define a broad class of data generating processes that are compositional and for which we can provably show that inference models can generalize to novel compositions that have not been part of the training set. More precisely, our contributions are as follows:
\begin{itemize}
    \item We specify \textit{compositional data-generating processes} both in terms of their function class and latent distributions (Sections~\ref{sec:compositionality} and \ref{sec:generalization}) such that they cover a wide range of assumptions made by existing compositional methods. 
    
    \item We prove a set of sufficient conditions under which models trained on the data are able to generalize compositionally (Section~\ref{sec:conditions}).
    
    \item We validate our theory in a range of synthetic experiments and perform several ablation studies that relate our findings to empirical methods (Section~\ref{sec:experiments}).
\end{itemize}

\begin{figure}[t]
  \centering
  \includegraphics[width=0.95\linewidth]{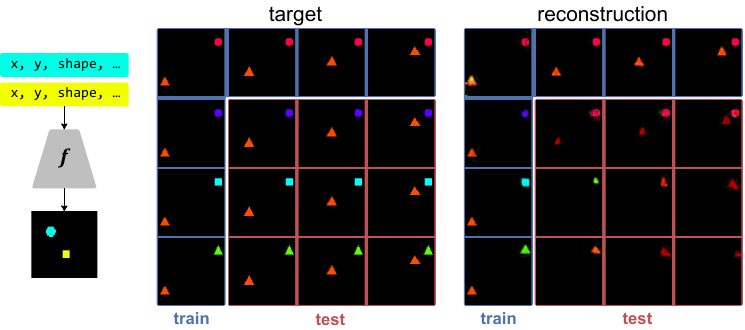}
  \caption{\textbf{Left}: We train a model $f$ to reconstruct images containing two sprites given their latent representation \texttt{(x, y, shape, size, color)}. \textbf{Center}: In the \textcolor{sbbluedeep}{\textbf{training set}} (top row and left column), one sprite is fixed in its base configuration (orange triangle or red circle), while the other can vary freely (in this example, sprite~1 varies in position, sprite~2 in shape and color). As a result, each sample in the \textcolor{sbreddeep}{\textbf{test set}} (lower right block) can be expressed as a novel \textit{composition} of known \textit{components}. \textbf{Right}: While the model is able to perfectly fit the training data, it fails to \textit{generalize compositionally} to the test data.}\label{fig:motivation}
\end{figure}

\section{Related Work}\label{sec:related}
\paragraph{Representation learning}
\textit{Disentanglement} and \textit{identifiable representation learning} aim to learn succinct representations that both factorize the data space efficiently and are robust towards distributional changes \cite{locatello2019fairness, 6472238, NEURIPS2019_bc3c4a63}. However, the expectation that more compositional representations lead to better out-of-distribution (OOD) generalization has not been met, as demonstrated by \citet{schottVisualRepresentationLearning2022} and \citet{monteroLostLatentSpace2022}. Although our work does not directly address generalization issues in identifiable representation learning, our setup is directly inspired by it, and we examine data-generating processes similar to \cite{reizinger2022embrace, zimmermann2021contrastive, von2021self}.

\paragraph{Empirical Approaches}
Many empirical methods use compositional priors and claim improved compositional generalization. The problem has been studied especially closely in language~\cite{russin2019compositional, akyurek2020learning, marcheggiani2018exploiting}, but it remains far from being solved~\cite{lake2018generalization, loula2018rearranging, keysers2019measuring}. Object-centric learning is another domain in which compositionality plays a major role, and many approaches explicitly model the composition of scenes from object-``slots''~\cite{locatelloObjectCentricLearningSlot2020, hinton2018matrix, greff2019multi, burgessMONetUnsupervisedScene2019, kipf2021conditional}. The slot approach is also common in vector-symbolic architectures like \cite{vankovTrainingNeuralNetworks2020} and \cite{fradyLearningGeneralizationCompositional2023}. For most of these works, however, compositional generalization is not a focal point, and their actual generalization capability remains to be studied. There are also some architectures like transformers~\cite{vaswaniAttentionAllYou2017}, graph neural networks~\cite{cappartcombinatorial}, bilinear models~\cite{hong2021bi}, or complex-valued autoencoders~\cite{loweComplexValuedAutoencodersObject2022} that have been claimed to exhibit some degree of compositional generalization, but again, principled analysis of their generalization ability is lacking. Our framework can guide the systematic evaluation of these methods. While we use the visual domain as an example throughout this work, our contributions are not tied to any specific data domain or modality. 

\paragraph{Theoretical approaches to OOD generalization}
The OOD generalization problem for non-linear models where train and test distributions differ in their densities, but not their supports, has been studied extensively, most prominently by \citet{ben2014domain} and \citet{sugiyama2007covariate}. We refer the reader to \citet{shenOutOfDistributionGeneralizationSurvey2021} for a comprehensive overview. In contrast, compositional generalization requires generalizing to a distribution with different, possibly non-overlapping support. This problem is more challenging and remains unsolved. \citet{ben2010theory} were able to show that models can generalize between distributions with a very specific relation, but it is unclear what realistic distributions fit their constraints. \citet{netanyahuLearningExtrapolateTransductive2023} also study \textit{out-of-support} problems theoretically but touch on compositional generalization only as a workaround for general extrapolation. Recently, \citet{dongFirstStepsUnderstanding2022} took a first step towards a more applicable theory of compositional generalization to unseen domains, but their results still rely on specific distributions, and they do not consider functions with arbitrary (nonlinear) compositions or multi-variate outputs. In contrast, our framework is independent of the exact distributions used for training and testing, and our assumptions on the compositional nature of the data allow us to prove generalization in a much broader setting. 

\section{A framework for compositional generalization}\label{sec:theory}

We use the following notation throughout. $[N]$ denotes the set of natural numbers $\{1, 2, ..., N\}$. $\boldsymbol{\mathrm {Id}}$~denotes the (vector-valued) identity function. We denote two functions $f, g$ agreeing for all points in set $P$ as $f {\equiv}_P \;g$. Finally, we write the total derivative of a vector-valued function $\boldsymbol f$ by all its inputs $\boldsymbol z$ as $\frac{\partial \boldsymbol f}{\partial \boldsymbol z}$, corresponding to the Jacobian matrix with entries $\frac{\partial f_i}{\partial z_j}$.





\subsection{Compositionality}\label{sec:compositionality}
Colloquially, the term ``\emph{compositional data}'' implies that the data can be broken down into discrete, identifiable components that collectively form the whole. For instance, in natural images, these components might be objects, while in music, they might be individual instruments. As a running illustrative example, we will refer to a simple dataset similar to multi-dSprites~\cite{burgessMONetUnsupervisedScene2019}, as shown in Figure~\ref{fig:motivation}. Each sample in this dataset is a composition of two basic sprites, each with a random position, shape, size, and color, size.

Drawing inspiration from identifiable representation learning, we define compositionality mathematically as a property of the data-generating process. In our example, the samples are generated by a simple rendering engine that initially renders each sprite individually on separate canvases. These canvases are then overlaid to produce a single image featuring two sprites. More specifically, the rendering engine uses the (latent) properties of sprite one, $\boldsymbol z_1 = (z_{1,\text{x}}, z_{1,\text{y}}, z_{1,\text{shape}}, z_{1,\text{size}}, z_{1,\text{color}})$, to produce an image $\boldsymbol{\tilde x}_1$ of the first sprite. The same process is repeated with the properties of sprite two, $\boldsymbol z_2 = (z{2,\text{x}}, z_{2,\text{y}}, z_{2,\text{shape}}, z_{2,\text{size}}, z_{2,\text{color}})$, to create an image $\boldsymbol{\tilde x}_2$ of the second sprite. Lastly, the engine combines $\boldsymbol{\tilde x}_1$ and $\boldsymbol{\tilde x}_2$ to create the final overlaid rendering $\boldsymbol x$ of both sprites. Figure~\ref{fig:function_class} demonstrates this process.

In this scenario, the individual sprite renderers carry out the bulk of the work. In contrast, the composition of the two intermediate sprite images $\boldsymbol{\tilde x}_1, \boldsymbol{\tilde x}_2$ can be formulated as a simple pixel-wise operation (see Appendix~\ref{app:theory_details} for more details). The rendering processes for each sprite are independent: adjusting the properties of one sprite will not influence the intermediate image of the other, and vice versa.

We posit that this two-step generative procedure---the (intricate) generation of individual components and their (simple) composition into a single output---is a key characteristic of a broad class of compositional problems. If we know the composition function, then understanding the basic elements (for example, the individual sprites) is enough to grasp all possible combinations of sprites in the dataset.
We can thus represent any latent variable model ${\boldsymbol f} : \mathcal{Z} \to \mathcal{X}$, which maps a latent vector ${\boldsymbol z}\in\mathcal{Z}$ to a sample $\boldsymbol x$ in the observation space $\mathcal{X}$, as a two-step generative process.

\begin{definition}[Compositional representation]\label{def:function_class}
$\{\boldsymbol C, \boldsymbol \varphi_1, \dots, \boldsymbol \varphi_K, \mathcal{Z}_1, \dots, \mathcal{Z}_K, \tilde{\mathcal{X}}_1, \dots, \tilde{\mathcal{X}}_K\}$ is a \textit{compositional representation} of function $\boldsymbol f$ if 
\begin{equation}
    \forall\boldsymbol{z}\in\mathcal{Z}\quad \boldsymbol f(\boldsymbol z) = \boldsymbol C \big( \boldsymbol \varphi_1(\boldsymbol z_1), ..., \boldsymbol \varphi_K(\boldsymbol z_K) \big)\quad\text{and}\quad\mathcal{Z} = \mathcal{Z}_1\times\dots\times\mathcal{Z}_K,
\end{equation}
where $\boldsymbol{z}_i$ denotes the canonical projection of $\boldsymbol{z}$ onto $\mathcal{Z}_i$. We refer to $\boldsymbol \varphi_k: \mathcal Z_k \to \tilde{\mathcal X}_k$ as the \textit{component functions}, to $\tilde{\mathcal X}_1, \dots, \tilde{\mathcal X}_K$ as the (hidden) component spaces, and to $\boldsymbol C: \tilde{\mathcal X}_1 \times \dots \times \tilde{\mathcal X}_K \to \mathcal X$ as the \textit{composition function}.
\end{definition}

Note that in its most general form, we do not require the component functions to be identical or to map to the same component space. The compositional representation of a function $\boldsymbol{f}$ is also not unique. For instance, any $\boldsymbol{f}$ possesses a trivial compositional representation given by $\{\boldsymbol{f}, \boldsymbol{\mathrm {Id}}, \dots, \boldsymbol{\mathrm {Id}}\}$ (for the sake of clarity, we will omit the explicit mention of the latent factorization and component spaces henceforth). We will later establish conditions that must be met by at least one compositional representation of $\boldsymbol{f}$.

\begin{figure}[t]
  \centering
  \begin{minipage}[b]{.425\textwidth}
      \centering
      \begin{adjustbox}{max width=\textwidth}
      \begin{tikzpicture}
    \begin{scope}[minimum size=9mm, inner sep=0]
      \node[circle, fill=lightgray](z1) at (0, 3) {$\boldsymbol z_1$};
      \node[circle, fill=lightgray](z2) at (0, 1.8) {$\boldsymbol z_2$};
      \node[circle, fill=lightgray](zK) at (0, 0) {$\boldsymbol z_K$};

      \node[circle, fill=lightgray](zt1) at (3, 3) {$\boldsymbol{\tilde x}_1$};
      \node[circle, fill=lightgray](zt2) at (3, 1.8) {$\boldsymbol{\tilde x}_2$};
      \node[circle, fill=lightgray](ztK) at (3, 0) {$\boldsymbol{\tilde x}_K$};

      \coordinate(C) at (5, 1.5);
      \node[circle, fill=lightgray](x) at (6, 1.5) {$\boldsymbol{x}$};
    \end{scope}

    \node at (1.5, 1.15){$\vdots$};

    \begin{scope}[out=0, in=180, line width=1pt]
        \draw (z1) edge[-latex] node[above]{$\boldsymbol \varphi_1$} (zt1) (zt1) edge (C);
        \draw (z2) edge[-latex] node[above]{$\boldsymbol \varphi_2$} (zt2) (zt2) edge (C);
        \draw (zK) edge[-latex] node[above]{$\boldsymbol \varphi_K$} (ztK) (ztK) edge (C);
        \draw (C) edge[-latex] node[above, pos=0]{$\boldsymbol C$}(x);
    \end{scope}
\end{tikzpicture}
      \end{adjustbox}
      \vspace{8mm}
      \caption{\textit{Compositional representation} of a function (Definition~\ref{def:function_class}). \textit{Component functions} $\boldsymbol \varphi_k$ map each \textit{component latent} $\boldsymbol z_k$ to an intermediate representation $\boldsymbol{\tilde x}_k$. The \textit{composition function} $\boldsymbol C$ composes them into a final data point $\boldsymbol x$.}\label{fig:function_class}
  \end{minipage}%
  \hspace{.04\textwidth}
  \begin{minipage}[b]{.525\textwidth}
      \centering
      \includegraphics[width=\textwidth]{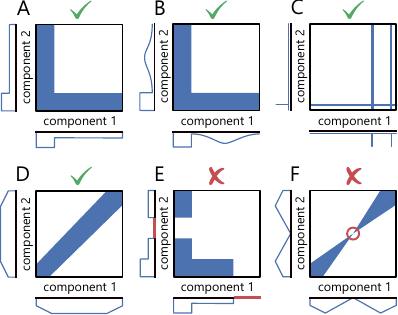}
      \caption{\textbf{A-D} Distribution $P$ (blue) has \textit{compositional support} \wrt to the entire latent space if it has full support over the marginals. \textbf{E} Gaps in the support require the model to interpolate/extrapolate rather than generalize compositionally. \textbf{F} The support of the joint needs to be in an open set.}\label{fig:comp_support}
  \end{minipage}
\end{figure}




Our definition of compositionality naturally aligns with various methods in the fields of identifiability, disentanglement, or object-centric learning. In the decoder of SlotAttention~\cite{locatelloObjectCentricLearningSlot2020}, for example, each component function is a spatial broadcast decoder followed by a CNN, and the composition function is implemented as alpha compositing.
\citet{fradyLearningGeneralizationCompositional2023} model the component functions as element-wise multiplication of high-dimensional latent codes, which are then composed through a straightforward sum. A similar approach is chosen by \citet{vankovTrainingNeuralNetworks2020}, except that interactions between components are modeled using matrix multiplication.

\subsection{Compositional Generalization}\label{sec:generalization}

The model in Figure~\ref{fig:motivation} was trained supervisedly, \ie it was trained to reconstruct samples $\boldsymbol{x}$ given the ground-truth latent factors $(\boldsymbol{z}_1, \boldsymbol{z}_2)$ for each sprite (see Section~\ref{sec:experiments} for more details). We denote this model as $\boldsymbol{\hat f}$, indicating that it is meant to replicate the ground-truth generating process $\boldsymbol{f}$ of the data. The model $\boldsymbol{\hat f}$ indeed learned to fit $\boldsymbol{f}$ almost perfectly on the training distribution $P$, but failed to do so on the test distribution $Q$.

This failure is surprising because the test samples only contain sprites already encountered during training. The novelty lies solely in the combination of these sprites. We would expect any model that comprehends the compositional nature of the dataset to readily generalize to these test samples.

This compositional aspect of the generalization problem manifests itself in the structure of the training and test distribution. In our running example, the model was trained on samples from a distribution $P$ that contained 
all possible sprites in each slot, but only in combination with one base sprite in the other slot (illustrated in Figure~\ref{fig:comp_support}A). More formally, the support of $P$ can be written as
\begin{equation}\label{eq:orthogonal}
    \supp P = \left\{ (\boldsymbol z_1 \in \mathcal Z_1, \boldsymbol z_2 \in \mathcal Z_2) | \boldsymbol z_1 = \boldsymbol z_1^0 \vee \boldsymbol z_2 = \boldsymbol z_2^0 \right\}.
\end{equation}
The test distribution $Q$ is a uniform distribution over the full product space $\mathcal Z_1\times \mathcal Z_2$, i.e. it contains all possible sprite combinations. More generally, we say that a generalization problem is compositional if the test distribution contains only components that have been present in the training distribution, see Figure~\ref{fig:comp_support}. This notion can be formalized as follows based on the support of the marginal distributions:

\begin{definition}[Compositional support]\label{def:comp_support}
Given two arbitrary distribution $P, Q$ over latents $\boldsymbol z = (\boldsymbol z_1, ..., \boldsymbol z_K) \in \mathcal Z = \mathcal Z_1 \times \cdots \times \mathcal Z_K$, $P$ has \textit{compositional support} \wrt $Q$ if
\begin{equation}
    \supp P_{\boldsymbol z_k} = \supp Q_{\boldsymbol z_k} \subseteq \mathcal Z_k \quad\forall k \in [K].
\end{equation}
\end{definition}


Clearly, \textit{compositional generalization} requires compositional support. If regions of the test latent space exist for which a component is not observed, as in Figure~\ref{fig:comp_support}E, we can examine a model's generalization capability, but the problem is not compositional. Depending on whether the gap in the support is in the middle of a latent's domain or towards either end, the generalization problem becomes an \textit{interpolation} or \textit{extrapolation} problem instead, which are not the focus of this work.

\subsection{Sufficient conditions for compositional generalization}\label{sec:conditions}
With the above setup, we can now begin to examine under what conditions compositional generalization can be guaranteed to occur.

To make this question precise, let us assume for the moment that the sprites don't occlude each other but that they are just summed up in pixel space. Then the compositional representation of the generative process is simply $\{\boldsymbol{\mathrm {Id}}, \boldsymbol{\varphi}_1, \boldsymbol{\varphi}_2\}$, i.e.
\begin{equation}
    \label{eq:sumcase}
    \boldsymbol{f}(\boldsymbol{z}) = \boldsymbol{\varphi}_1(\boldsymbol z_1) + \boldsymbol{\varphi}_2(\boldsymbol z_2).
\end{equation}

The question becomes: Given supervised samples $(\boldsymbol{z}_i, \boldsymbol{x}_i)$ from $P$, can we learn a new model $\boldsymbol{\hat f}$ that is equivalent to $\boldsymbol{f}$ on $Q$, i.e. for which $\boldsymbol{\hat f}\equiv_{Q} \boldsymbol{f}$? We assume that $\boldsymbol{C}$ is known, so in order to generalize, we must be able to reconstruct the individual component functions $\boldsymbol{\varphi}_i$.

For the simple case from equation~\ref{eq:sumcase}, we can fully reconstruct the component functions as follows. First, we note that if $\supp P$ is in an open set, we can locally reconstruct the hidden Jacobian of $\boldsymbol{\varphi}_i$ from the observable Jacobian of $\boldsymbol{f}$ as
\begin{equation}
    \frac{\partial\boldsymbol{f}}{\partial\boldsymbol{z}_k}(\boldsymbol{z}) = \frac{\partial\boldsymbol{\varphi}_k}{\partial\boldsymbol{z}_k}(\boldsymbol{z}_k).
\end{equation}
Since the training distribution contains all possible component configurations $\boldsymbol{z_i}$, we can reconstruct the Jacobian of $\boldsymbol{\varphi_i}$ in every point $\boldsymbol{z_i}$. Then we know everything about $\boldsymbol{\varphi_i}$ up to a global offset (which can be removed if there exists a known initial point for integration).


Our goal is to extend this approach to a maximally large set of composition functions $\boldsymbol{C}$. Our reasoning is straightforward if $\boldsymbol{C}$ is the identity, but what if we have occlusions or other nonlinear interactions between slots? What are general conditions on $\boldsymbol{C}$ and the support of the training distribution $P$ such that we can still reconstruct the individual component functions and thus generalize compositionally?

Let us now consider the sprites example with occlusions, and let us assume that the support of $P$ is basically a thin region around the diagonal; see Figure~\ref{fig:suff_support}~(left). In this case, the two sprites are always relatively similar, leading to large overlaps. It is impossible to reconstruct the full Jacobian of the occluded sprite from a single sample. Instead, we need a set of samples for which the background sprite is the same while the foreground sprite is in different positions; see Figure~\ref{fig:suff_support}~(right). With sufficient samples of this kind, we can observe all pixels of the background sprite at least once. Then reconstruction of the Jacobian of $\boldsymbol{\varphi_1}$ is possible again.

This line of thought brings us to a more general condition on the data-generating process: The composition function $\boldsymbol{C}$ and the support $P$ must be chosen such that the full Jacobian can be reconstructed for each component function for all component latents. We formally define the concept of \emph{sufficient support} below. Note that whether the support of $P$ is sufficient or not strongly depends on the choice of composition function $\boldsymbol{C}$.

\begin{definition}[Sufficient support]\label{def:suff_support}
A distribution $P$ over latents $\boldsymbol z = (\boldsymbol z_1, ..., \boldsymbol z_K) \in \mathcal Z$, has \textit{sufficient support} \wrt a compositional representation of a function $\boldsymbol f$, if $\supp P$ is in an open set and for any latent value $\boldsymbol z_k^*$, there exists a (finite) set of points $P'_k(\boldsymbol z_k^*) \subseteq \{\boldsymbol p \in \supp P| \boldsymbol p_k = \boldsymbol z_k^* \}$ for which the sum of total derivatives of $\boldsymbol C$ has full rank. That is,
\begin{equation}
    \rank \sum_{\boldsymbol p \in P'_k(\boldsymbol z_k^*)} \frac{\partial \boldsymbol C}{\partial \boldsymbol \varphi_k}\big(\boldsymbol \varphi(\boldsymbol p)\big) = M,
\end{equation}
where $M$ is the dimension of the component space $\mathcal{\tilde X}_k \subseteq \mathbb R^M$.
\end{definition}

We are now ready to state our main theorem, namely that if $\boldsymbol{f}, \boldsymbol{\hat f}$ share the same composition function and if $P$ has compositional and sufficient support, then the model $\boldsymbol{\hat f}$ generalizes to $Q$ if it matches the ground-truth data-generating process $\boldsymbol{f}$ on $P$.

\begin{figure}[t]
  \centering
  \includegraphics[width=0.9\textwidth]{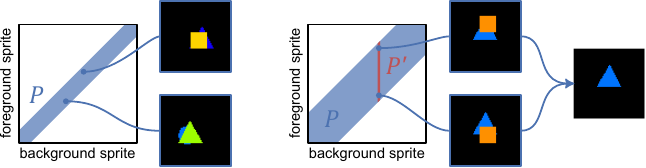}
  \caption{Illustration of the \textit{sufficient support} condition. For a (compositional) diagonal support, all samples will contain sprites with similar positions, leading to heavy occlusions and making reconstruction of the background sprite impossible (left). Reconstruction of the background sprite is only possible if the support is chosen broad enough, such that the subset of points sharing the same background sprite $P'$ contains samples with sufficient variance in the foreground sample. Specifically, each pixel of the background sprite must be observable at least once (right).}\label{fig:suff_support}
\end{figure}

\begin{theorem}\label{theorem}
Let  $P$, $Q$ be arbitrary distributions over latents $\boldsymbol z = (\boldsymbol z_1, ..., \boldsymbol z_K) \in \mathcal Z$.
Let $\boldsymbol f, \boldsymbol{\hat f}$ be functions with \emph{compositional representations} in the sense of definition~\ref{def:function_class} that share $\{ \boldsymbol C, \mathcal Z_1, ..., \mathcal Z_K \}$, but use arbitrary $\{ \boldsymbol{\varphi}_1, ..., \boldsymbol{\varphi}_K, \mathcal{\tilde X}_1, ..., \mathcal{\tilde X}_K \}, \{ \boldsymbol{\hat\varphi}_1, ..., \boldsymbol{\hat\varphi}_K, \mathcal{\hat X}_1, ..., \mathcal{\hat X}_K \}$.

Assume the following assumptions hold:
\begin{enumerate}[label=(A\arabic*)]
    \item\label{ass:well_behaved} $\boldsymbol C, \boldsymbol \varphi_k, \boldsymbol{\hat \varphi}_k$ are differentiable, $\boldsymbol C$ is Lipschitz in $\boldsymbol\varphi$, and $\boldsymbol\varphi$ is continuous in $\boldsymbol z$.
    \item\label{ass:comp_support} $P$ has \emph{compositional support} \wrt $Q$ in the sense of definition~\ref{def:comp_support}.
    \item\label{ass:suff_support} $P$ has \emph{sufficient support} \wrt $\boldsymbol f$ in the sense of definition~\ref{def:suff_support}.
    \item\label{ass:initial_point} There exists an initial point $\boldsymbol p^0\in \supp P$ such that $\boldsymbol \varphi (\boldsymbol p^0) = \boldsymbol{\hat \varphi}(\boldsymbol p^0)$.
\end{enumerate}

Then $\boldsymbol{\hat f}$ generalizes to $Q$, \ie $\boldsymbol f \underset{P}{\equiv} \boldsymbol{\hat f} \implies \boldsymbol f \underset{Q}{\equiv} \boldsymbol{\hat f}$.
\end{theorem}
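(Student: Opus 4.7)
The plan is to reduce the theorem to showing that the component functions themselves agree: $\boldsymbol{\hat\varphi}_k(\boldsymbol z_k) = \boldsymbol\varphi_k(\boldsymbol z_k)$ for every $\boldsymbol z_k \in \supp P_{\boldsymbol z_k}$ and every $k$. Once this is established, the shared composition function $\boldsymbol C$ together with compositional support \ref{ass:comp_support} (which gives $\supp Q_{\boldsymbol z_k} = \supp P_{\boldsymbol z_k}$) immediately yields $\boldsymbol f \equiv_Q \boldsymbol{\hat f}$. The attack has two ingredients: a pointwise Jacobian-equality derived from the $\boldsymbol C$-inversion enabled by \ref{ass:suff_support}, and an integration argument anchored at the initial point $\boldsymbol p^0$ of \ref{ass:initial_point}.

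First I would translate $\boldsymbol f \equiv_P \boldsymbol{\hat f}$ into a derivative identity. Differentiating
\begin{equation*}
\boldsymbol C\bigl(\boldsymbol\varphi_1(\boldsymbol z_1), \dots, \boldsymbol\varphi_K(\boldsymbol z_K)\bigr) \;=\; \boldsymbol C\bigl(\boldsymbol{\hat\varphi}_1(\boldsymbol z_1), \dots, \boldsymbol{\hat\varphi}_K(\boldsymbol z_K)\bigr)
\end{equation*}
with respect to the $k$-th block---legitimate because $\supp P$ lies in an open set by \ref{ass:suff_support} and everything is differentiable by \ref{ass:well_behaved}---gives, via the chain rule,
\begin{equation*}
\frac{\partial \boldsymbol C}{\partial \boldsymbol\varphi_k}\bigl(\boldsymbol\varphi(\boldsymbol p)\bigr)\,\frac{\partial \boldsymbol\varphi_k}{\partial \boldsymbol z_k}(\boldsymbol p_k) \;=\; \frac{\partial \boldsymbol C}{\partial \boldsymbol\varphi_k}\bigl(\boldsymbol{\hat\varphi}(\boldsymbol p)\bigr)\,\frac{\partial \boldsymbol{\hat\varphi}_k}{\partial \boldsymbol z_k}(\boldsymbol p_k)
\end{equation*}
for all $\boldsymbol p \in \supp P$. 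At points where I already know $\boldsymbol\varphi(\boldsymbol p) = \boldsymbol{\hat\varphi}(\boldsymbol p)$, the two outer Jacobians coincide and the identity collapses to $\tfrac{\partial \boldsymbol C}{\partial \boldsymbol\varphi_k}(\boldsymbol\varphi(\boldsymbol p))\bigl[\tfrac{\partial \boldsymbol\varphi_k}{\partial \boldsymbol z_k} - \tfrac{\partial \boldsymbol{\hat\varphi}_k}{\partial \boldsymbol z_k}\bigr](\boldsymbol p_k) = 0$. Summing this equation over the finite set $P'_k(\boldsymbol p_k)$ from \ref{ass:suff_support} (all of whose points share the same $k$-th coordinate, so the bracket factors out of the sum) and invoking full rank forces $\tfrac{\partial \boldsymbol\varphi_k}{\partial \boldsymbol z_k}(\boldsymbol p_k) = \tfrac{\partial \boldsymbol{\hat\varphi}_k}{\partial \boldsymbol z_k}(\boldsymbol p_k)$.

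From this local Jacobian equality I would then extend the value equality globally by a fundamental-theorem-of-calculus argument: since $\boldsymbol\varphi(\boldsymbol p^0) = \boldsymbol{\hat\varphi}(\boldsymbol p^0)$ by \ref{ass:initial_point} and the derivatives of $\boldsymbol\varphi_k - \boldsymbol{\hat\varphi}_k$ vanish everywhere the previous step applies, integrating along paths in $\supp P_{\boldsymbol z_k}$ emanating from $\boldsymbol p^0_k$ propagates the zero difference to all of $\supp P_{\boldsymbol z_k}$. Continuity of $\boldsymbol\varphi_k, \boldsymbol{\hat\varphi}_k$ and Lipschitzness of $\boldsymbol C$ from \ref{ass:well_behaved} are what let the local derivative identities assemble into a global pointwise equality.

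The hard part is the implicit circularity in the cancellation step: to apply the rank argument at $\boldsymbol p$, I need value agreement $\boldsymbol\varphi(\boldsymbol q) = \boldsymbol{\hat\varphi}(\boldsymbol q)$ at every auxiliary point $\boldsymbol q \in P'_k(\boldsymbol p_k)$, but these points carry \emph{other} components' coordinates for which agreement has not yet been established. Breaking this chicken-and-egg loop is the real work of the proof; I would do it by a simultaneous bootstrap, showing that the set of points where all components agree is both open (via the Jacobian/integration mechanism above) and closed (via continuity of $\boldsymbol\varphi_k$) in $\supp P$, so that starting from $\boldsymbol p^0$ the agreement spreads to the entire connected component of $\supp P$ containing it, and from there to every marginal support.
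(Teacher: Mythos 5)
Your skeleton matches the paper's: reduce $\boldsymbol f \equiv_Q \boldsymbol{\hat f}$ to agreement of the component functions on the marginal supports (the paper's Steps 1 and 2, using the shared $\boldsymbol C$ and compositional support), then try to obtain that agreement from the chain-rule identity $\frac{\partial \boldsymbol f}{\partial \boldsymbol z_k}(\boldsymbol z) = \frac{\partial \boldsymbol C}{\partial \boldsymbol \varphi_k}\big(\boldsymbol\varphi(\boldsymbol z)\big)\frac{\partial \boldsymbol\varphi_k}{\partial \boldsymbol z_k}(\boldsymbol z_k)$ together with the rank condition of (A3) and integration from the initial point of (A4). You have also correctly put your finger on the crux: the outer Jacobians $\frac{\partial \boldsymbol C}{\partial \boldsymbol\varphi_k}(\boldsymbol\varphi(\boldsymbol p))$ and $\frac{\partial \boldsymbol C}{\partial \boldsymbol\varphi_k}(\boldsymbol{\hat\varphi}(\boldsymbol p))$ cannot be cancelled against each other without already knowing $\boldsymbol\varphi = \boldsymbol{\hat\varphi}$ at every auxiliary point of $P'_k$.

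Where you diverge from the paper is in how that circularity is broken, and there your proposal has a gap. Your open--closed bootstrap needs the agreement set to be open, but the cancellation at $\boldsymbol p$ requires value agreement at all points of $P'_k(\boldsymbol p_k)$, which share only the $k$-th coordinate with $\boldsymbol p$ and may be arbitrarily far away in the others; local agreement near $\boldsymbol p^0$ therefore does not feed the hypothesis of the rank argument at nearby points, so the ``openness'' half does not go through as stated (and would in any case need connectivity assumptions on $\supp P$ and its fibers that the theorem does not make). The paper never performs the cancellation. It keeps the unknown $\boldsymbol\varphi$ inside the equation: after summing over $P'_k$ and applying the pseudoinverse, the chain rule becomes a first-order system $\partial_i \boldsymbol\varphi(\boldsymbol z) = \boldsymbol a_i\big(\boldsymbol z, \boldsymbol\varphi(\boldsymbol z)\big)$ in which $\boldsymbol a_i$ is a \emph{known} function, determined by $\boldsymbol C$ and the observable $\frac{\partial \boldsymbol f}{\partial \boldsymbol z_k}$, which coincide for $\boldsymbol f$ and $\boldsymbol{\hat f}$ on $P$. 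Both $\boldsymbol\varphi$ and $\boldsymbol{\hat\varphi}$ then solve the same initial value problem anchored at $\boldsymbol p^0$, and uniqueness follows by decomposing into ODEs along coordinate directions and applying Picard--Lindelöf; this is exactly where the Lipschitz and continuity requirements of (A1), which your sketch mentions only in passing, do the actual work. Replacing your bootstrap by this initial-value-problem uniqueness argument would close the gap.
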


The proof follows roughly the intuition we developed above in that we show that the Jacobians of the component functions can be reconstructed everywhere. Bear in mind that this is simply a construction for the proof: The theorem holds whenever $\boldsymbol{\hat f}$ fits the output of $\boldsymbol{f}$ on the training distribution $P$, which we can achieve with standard supervised training and without access to the ground-truth Jacobians. It should also be emphasized that since the compositional representation is not unique, the theorem holds if there exists at least one for which the assumptions are fulfilled. Note also that the initial point condition \ref{ass:initial_point} is needed in the proof, but in all practical experiments (see below), we can generalize compositionally without explicit knowledge of that point. We relegate further details to Appendix~\ref{app:proof}.


\section{Experiments}\label{sec:experiments}
We validate our theoretical framework on the multi-sprite data. All models were trained for 2000 epochs on training sets of 100k samples using an NVIDIA\,RTX\,2080\,Ti; all test sets contain 10k samples. Table~\ref{tab:experiments} summarizes the reconstruction quality achieved on the in-domain (ID) test set ($P$) and the entire latent space ($Q$) for all experiments.

\paragraph{Motivating experiment}\label{sec:main_experiment}
We implement the setup from Figure~\ref{fig:motivation} to demonstrate that a compositional model does indeed generalize if the conditions from Theorem~\ref{theorem} are met. We model the component functions as four fully-connected layers followed by four upsampling-convolution stages, mapping the 5d component latent to $64\times64$ RGB images. For training stability, the composition function is implemented as a soft pixel-wise addition using the sigmoid function $\sigma(\cdot)$ as
\begin{equation}\label{eq:add_sigmoid}
    \boldsymbol x = \sigma(\boldsymbol{\tilde x}_1) \cdot \boldsymbol{\tilde x}_1 + \sigma(-\boldsymbol{\tilde x}_1) \cdot \boldsymbol{\tilde x}_2,
\end{equation}
which allows component~1 to occlude component~2. We contrast this to a non-compositional \textit{monolithic} model, which has the same architecture as a single component function (with adjusted layer sizes to match the overall parameter count of the compositional model). We show that both models have the capacity to fit the data by training on random samples covering the entire latent space (Table~\ref{tab:experiments}, \textbf{\#1,2}). We then train on a distribution with orthogonal support as in equation~\ref{eq:orthogonal}, albeit with two planes for the foreground component to satisfy the sufficient support condition (Definition~\ref{def:suff_support}) as explained in Figure~\ref{fig:suff_support}. Both models can reconstruct ID samples, but only the compositional model generalizes to the entire latent space (Table~\ref{tab:experiments}, \textbf{\#3,4}).


\begin{table}[t]
    \centering
    \sisetup{
        tight-spacing=true,
        detect-family=true,
        detect-weight=true,
        mode=text,
        output-exponent-marker=\text{e}
    }
    \setlength{\tabcolsep}{0.4em}
    \renewcommand{\bfseries}{\fontseries{b}\selectfont} 
    \newrobustcmd{\B}{\bfseries}
    \begin{tabular}{r l l S[table-format=1.2e-1] @{\hspace{1pt}} l S[table-format=1.2e-1] @{\hspace{1pt}} l S[table-format=1.3] @{\hspace{1pt}} l S[table-format=-1.3] @{\hspace{1pt}} l }
        \toprule
        \B \# & \B Train Set & \B Model & \multicolumn{2}{c}{\B MSE ID $\scriptstyle\downarrow$} & \multicolumn{2}{c}{\B MSE all $\scriptstyle\downarrow$} & \multicolumn{2}{c}{\B $R^2$ ID $\scriptstyle\uparrow$} & \multicolumn{2}{c}{\B $R^2$ all $\scriptstyle\uparrow$} \\
        \midrule
         1 & Random     & Monolithic    & 1.73e-3 & \sd{1.47e-05} & 1.73e-3 & \sd{1.47e-05} & 0.931 & \sd{5.75e-04} & 0.931 & \sd{5.83e-04} \\
         2 & Random     & Compositional & 1.07e-03 & \sd{2.57e-05} & 1.07e-03 & \sd{2.57e-05} & 0.957 & \sd{1.02e-03} & 0.957 & \sd{1.02e-03} \\
         3 & Orthogonal & Monolithic    & 8.49e-04 & \sd{2.89e-05} & 4.06e-02 & \sd{3.86e-03} & 0.948 & \sd{1.71e-03} & -0.500 & \sd{6.70e-02} \\
         4 & Orthogonal & Compositional & 6.94e-04 &	\sd{1.09e-05} &	1.24e-03 &	\sd{4.11e-05} &	0.957 &	\sd{6.41e-04} &	0.951 &	\sd{1.43e-03}\\
        \midrule
         5 & Ortho.$\sim\mathcal N$ & Compositional & 7.01e-04 &	\sd{8.69e-06} &	1.24e-03 &	\sd{2.56e-05} &	0.957 &	\sd{5.51e-04} &	0.951 &	\sd{1.00e-03} \\
         6 & Diagonal & Compositional & 8.87e-04 &	\sd{1.03e-04} &	1.39e-03 &	\sd{4.04e-04} &	0.954 &	\sd{5.39e-03} &	0.945 &	\sd{1.61e-02} \\
         7 & Ortho. (broad) & Compositional & 6.50e-04 &	\sd{1.52e-05} &	1.16e-03 &	\sd{3.32e-05} &	0.959 &	\sd{9.67e-04} &	0.954 &	\sd{1.31e-03} \\
         8 & Diag. (broad) & Compositional & 9.51e-04 &	\sd{2.78e-05} &	1.13e-03 &	\sd{3.10e-05} &	0.957 &	\sd{1.24e-03} &	0.955 &	\sd{1.23e-03} \\
        \midrule
         9 & Ortho. (gap) & Compositional & 7.36e-04 &	\sd{2.70e-05} &	2.64e-03 &	\sd{1.81e-04} &	0.954 &	\sd{1.68e-03} &	0.895 &	\sd{7.40e-03} \\
        10 & Diag. (narrow) & Compositional & 2.22e-03 &	\sd{1.24e-03} &	1.04e-02 &	\sd{7.10e-03} &	0.867 &	\sd{7.45e-02} &	0.589 &	\sd{2.82e-01} \\
        11 & Orthogonal & Comp. (RGBa) & 2.67e-04 &	\sd{2.81e-06} &	5.24e-04 &	\sd{3.68e-06} &	0.984 &	\sd{1.80e-04} &	0.979 &	\sd{1.44e-04}\\
        \bottomrule
    \end{tabular}
    \vspace{10pt}
    \caption{We report the reconstruction quality measured as mean squared error (MSE, lower is better) and variance-weighted $R^2$ score (closer to 1 is better) for both the in-domain (ID) test set and the entire latent space, averaged over 5 random seeds. \textbf{\#1-4} The results demonstrate that a \textit{monolithic} model fails to generalize in the setup from Figure~\ref{fig:motivation}, but a \textit{compositional} model performs well on the entire latent space. \textbf{\#5-8} Generalization can occur in a variety of settings that fulfill the sufficient conditions from Theorem~\ref{theorem}. \textbf{\#9,10} Violating the compositional and sufficient support condition prohibits generalization, while choosing a more complex function class still works (\textbf{\#11}).}\label{tab:experiments}
\end{table}

\paragraph{Flexible compositional support} Next, we demonstrate the variety of settings that fulfil the compositional support assumption as illustrated in Figure~\ref{fig:comp_support}B and C. To this end, we repeat the experiment on training sets $P$ sampled from (i) a normal distribution with orthogonal support (Table~\ref{tab:experiments}, \textbf{\#5}) and (ii) a uniform distribution over a diagonal support chosen broad enough to satisfy the sufficient support condition (Table~\ref{tab:experiments}, \textbf{\#6}). The model generalizes to the entire latent space in both settings. Since the generalization performance is already close to ceiling, broadening the support of both distributions (Table~\ref{tab:experiments}, \textbf{\#7,8}) does not further increase performance.

\paragraph{Violating Conditions} Finally, we look at the effect of violating some conditions.

\begin{itemize}[leftmargin=12pt]
    \item \textbf{Gaps in support} (Table~\ref{tab:experiments}, \textbf{\#9}) If there are gaps in the support of the training set such that some component configurations are never observed (Figure~\ref{fig:comp_support}E) violates the compositional support condition (Definition~\ref{def:comp_support}). While the overall reconstruction performance only drops slightly, visualizing the reconstruction error over a 2d-slice of the latent space in Figure~\ref{fig:heatmap} illustrates clearly that generalization fails exactly where the condition is violated.
    
    \item \textbf{Insufficient training variability} (Table~\ref{tab:experiments}, \textbf{\#10}) Reducing the width of the diagonal support violates the sufficient support condition (Definition~\ref{def:suff_support}) as soon as some parts of the background component are always occluded and can not be observed in the output anymore. We can clearly see that reconstruction performance on the entire latent space drops significantly as a result.
    
    \item \textbf{Collapsed Composition Function} (Table~\ref{tab:experiments}, \textbf{\#11}) Changing the output of each component function from RGB to RGBa and implementing the composition as alpha compositing yields a model that is still compositional, but for which no support can satisfy the sufficient support condition since the derivative of transparent pixels will always be zero and the Jacobian matrix can therefore never have full rank (more details in Appendix~\ref{app:theory_details}). However, we observe that the model still generalizes to the entire latent space and achieves even lower reconstruction error than the original model. This emphasizes that what we present are merely \textit{sufficient} conditions, which might be loosened in future work.
\end{itemize}

\begin{figure}
    \centering
    \includegraphics[width=0.6\textwidth]{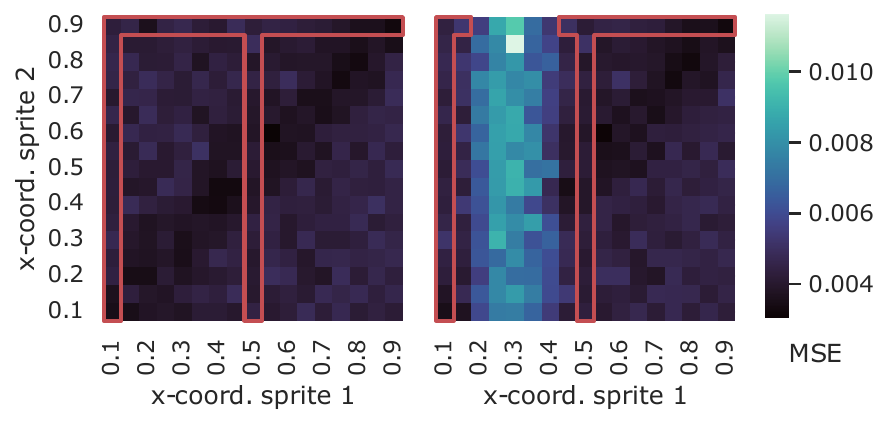}
    \caption{Heatmap of the reconstruction error over a $z_{1,\text{x}}$-$z_{2,\text{x}}$-projection of the latent space with overlaid training support (red). Generalization can occur when the support is compositional (left) but fails exactly where the support is incomplete at $z_{1,\text{x}} \in [0.14, 0.46]$ (right).}\label{fig:heatmap}
\end{figure}

\section{Discussion}\label{sec:discussion}

We presented a first step and a framework to study compositional generalization in a more principled way. Clearly, there remain many open questions and limitations that we leave for future work.

\paragraph{Supervised setting}
We only studied a supervised regression setting in which the model has access to the ground-truth latents of each training sample. Ultimately, we are
interested in the unsupervised setting akin to what is typically studied in identifiable representation learning. The unsupervised setting comes with inherent ambiguities that make generalizations guarantees harder to derive. Still, the results in this paper build an important foundation for future studies because sufficient conditions in the supervised setting can be considered necessary conditions in the unsupervised setting.

\paragraph{Jacobian and initial point} The proof of Theorem~\ref{theorem} utilizes the Jacobian of the ground-truth model. We emphasize again that this construction is necessary only for the proof and does not mean that we require access to the data-generating processes' full Jacobian for training.
Similarly, the existence of an initial point $\boldsymbol{p^0}$ is a technicality of the proof that is not reflected in the experiments. While it is not yet clear whether it is possible to complete the proof without the initial point condition,  we believe there is a self-consistency condition that might alleviate the need for this condition. The experiments thus hint at the existence of alternative proof strategies with relaxed assumptions.

\paragraph{Known composition function}
We also assume the composition function to be known which is approximately true in many interesting scenarios, such as object composition in scenes or the composition of instruments in music. In fact, many structured representation learning approaches like \eg SlotAttention~\cite{locatelloObjectCentricLearningSlot2020} incorporate structural components that are meant to mimic the compositional nature of the ground-truth-generating process. In other interesting cases like language, however, the composition function is unknown a priori and needs to be learned. This might be possible by observing how the gradients of $\boldsymbol{C}$ change with respect to a fixed slot, at least if certain regularity conditions are fulfilled. 




\paragraph{Inductive biases}
Some of the conditions we derived can be relaxed in the presence of certain inductive biases. For example, models with an inductive bias towards shift invariance might be able to cope with certain gaps in the training support (e.g., if sprites are not visible in every position). Similarly, assuming all component functions $\boldsymbol{\varphi}$ to be identical would substantially simplify the problem and allow for much smaller sufficient supports $P$. The conditions we derived do not assume any inductive bias but are meant to formally guarantee compositional generalization. We expect that our conditions generalize to more realistic conditions as long as the core aspects are fulfilled.

\paragraph{Error bounds} Our generalization results hold only if the learned model perfectly matches the ground-truth model on the training distribution. This is similar to identifiable representation learning, where a model must find the global minimum of a certain loss or reconstruction error for the theory to hold. Nonetheless, extending our results towards generalization errors that are bounded by the error on the training distribution is an important avenue for future work.

\paragraph{Broader impact} Compositional generalization, once achieved, has the potential to be beneficial in many downstream applications. By substantially increasing sample and training efficiency, it could help to democratize the development and research of large-scale models. Better generalization capabilities could also increase the reliability and robustness of models but may amplify existing biases and inequalities in the data by generalizing them and hinder our ability to interpret and certify a model's decisions. 

\section{Conclusion}

Machine learning, despite all recent breakthroughs, still struggles with generalization. Taking advantage of the basic building blocks that compose our visual world and our languages remains unique to human cognition. 
We believe that progress towards more generalizable machine learning is hampered by a lack of a formal understanding of how generalization can occur. This paper focuses on compositional generalization and provides a precise mathematical framework to study it. We derive a set of sufficient conditions under which compositional generalization can occur and which cover a wide range of existing approaches. We see this work as a stepping stone towards identifiable representation learning techniques that can provably infer and leverage the compositional structure of the data. It is certainly still a long road toward scalable empirical learning techniques that can fully leverage the compositional nature of our world. However, once achieved, there is an opportunity for drastically more sample-efficient, robust, and human-aligned machine learning models.
\section*{Acknowledgments}
We would like to thank (in alphabetical order): Jack Brady, Simon Buchholz, Attila Juhos, and Roland Zimmermann for helpful discussions and feedback.

This work was supported by the German Federal Ministry of Education and Research (BMBF): Tübingen AI Center, FKZ: 01IS18039A. WB acknowledges financial support via an Emmy Noether Grant funded by the German Research Foundation (DFG) under grant no. BR 6382/1-1 and via the Open Philantropy Foundation funded by the Good Ventures Foundation. WB is a member of the Machine Learning Cluster of Excellence, EXC number 2064/1 – Project number 390727645. This research utilized compute resources at the Tübingen Machine Learning Cloud, DFG FKZ INST 37/1057-1 FUGG. We thank the International Max Planck Research School for Intelligent Systems (IMPRS-IS) for supporting TW and PM.

\newpage
\section*{Author contributions}
The project was led and coordinated by TW. TW and PM jointly developed the theory with insights from WB. TW implemented and conducted the experiments with input from PM and WB. TW led the writing of the manuscript with help from WB, PM, and MB. TW created all figures with comments from PM and WB.
\bibliographystyle{unsrtnat}
\bibliography{references,manual_references}

\newpage
\setcounter{section}{0}
\renewcommand\thesection{\Alph{section}}
\section{Proof of Theorem \ref{theorem}}\label{app:proof}
We reiterate the setup and notation introduced in the paper here for ease of reference.

\paragraph{Notation}
$[N]$ denotes the set of natural number $\{1, 2, ..., N\}$.
$\boldsymbol{\mathrm {Id}}$ denotes the (vector-valued) identity function.
We write two functions $f, g$ agreeing for all points in set $P$ as $f \equiv_P g$.
Finally, we write the total derivative of a vector-valued function $\boldsymbol f$ by all its inputs $\boldsymbol z$ as $\frac{\partial \boldsymbol f}{\partial \boldsymbol z}$, \ie the Jacobian matrix with entries $\frac{\partial f_i}{\partial z_j}$.

\paragraph{Setup}
We are given two arbitrary distributions $P, Q$ over latents $\boldsymbol z = (\boldsymbol z_1, ..., \boldsymbol z_K) \in \mathcal Z$. Each latent $\boldsymbol z_k$ describes one of the $K$ \textit{components} of the final data point $\boldsymbol x$ produced by the ground-truth data-generating process $\boldsymbol f$. A model $\boldsymbol{\hat f}$ is trained to fit the data-generating process on samples of $P$; the aim is to derive conditions on $P$ and $\boldsymbol{\hat f}$ that are sufficient for $\boldsymbol{\hat f}$ to then also fit $\boldsymbol f$ on $Q$.

We assume that $\boldsymbol f, \boldsymbol{\hat f}$ are chosen such that we can find at least one \textit{compositional representation} (Definition~\ref{def:function_class}) for either function that shares a common \textit{composition function} $\boldsymbol C$ and factorization of the latent space $\mathcal Z_1 \times \cdots \times \mathcal Z_K = \mathcal Z$.

\begin{proof}[Proof of Theorem~\ref{theorem}]
For $\boldsymbol{\hat f}$ to generalize to $Q$, we need to show fitting $\boldsymbol f$ on $P$ implies also fitting it on $Q$, in other words
\begin{equation}\label{eq:goal}
    \boldsymbol f \underset{P}{\equiv} \boldsymbol{\hat f} \implies \boldsymbol f \underset{Q}{\equiv} \boldsymbol{\hat f}
\end{equation}

\proofpart\label{pr:phi_q_to_f_q}
Since $\boldsymbol C$ is the same for both functions, we immediately get
\begin{equation}\label{eq:phi_q_to_f_q}
    \boldsymbol \varphi \underset{Q}{\equiv} \boldsymbol{\hat \varphi} \implies \boldsymbol f \underset{Q}{\equiv} \boldsymbol{\hat f},
\end{equation}
\ie it suffices to show that the \textit{component functions} generalize.
Note, however, that since $\boldsymbol C$ is not generally assumed to be invertible, we do \textit{not} directly get that agreement of $\boldsymbol f, \boldsymbol{\hat f}$ on $P$ also implies agreement of their component functions $\boldsymbol \varphi, \boldsymbol{\hat \varphi}$ on $P$.

\proofpart\label{pr:phi_p_to_phi_q}
We require $P$ to have \textit{compositional support} \wrt $Q$ (Definition~\ref{def:comp_support} and Assumption~\ref{ass:comp_support}).
The consequence of this assumption is that any point $\boldsymbol q = (\boldsymbol q_1, ..., \boldsymbol q_K) \in Q$ can be constructed from components of the $K$ \textit{support points} $\boldsymbol p^k = \left(\boldsymbol p^k_1, ..., \boldsymbol p^k_K\right) \in P$ subject to $\boldsymbol p^k_k = \boldsymbol q_k$ as
\begin{equation}
    \boldsymbol q = \left( \boldsymbol p^1_1, ..., \boldsymbol p^K_K \right).
\end{equation}
A trivial consequence, then, is that points $\boldsymbol{\tilde x} \in \mathcal{\tilde X}$ in \textit{component space} corresponding to points in $Q$ in latent space can always be mapped back to latents in $P$
\begin{equation}
    \boldsymbol \varphi(\boldsymbol q) = \big(\boldsymbol \varphi_1(\boldsymbol q_1), ..., \boldsymbol \varphi_K(\boldsymbol q_K)\big) = \left(\boldsymbol \varphi_1\left(\boldsymbol p^{(1)}_1\right), ..., \boldsymbol \varphi_K\left( \boldsymbol p^{(K)}_K \right)\right)
\end{equation}
because each \textit{component function} $\boldsymbol \varphi_k$ only depends on the latents $\boldsymbol z_k$ of a single component.
This is also the case for the component functions $\boldsymbol{\hat \varphi}$ of $\boldsymbol{\hat f}$ so that we get
\begin{equation}\label{eq:phi_p_to_phi_q}
    \boldsymbol \varphi \underset{P}{\equiv} \boldsymbol{\hat \varphi} \implies \boldsymbol \varphi \underset{Q}{\equiv} \boldsymbol{\hat \varphi}.
\end{equation}

\proofpart\label{pr:f_p_to_dphi_p}
We now only need to show that $\boldsymbol \varphi \underset{P}{\equiv} \boldsymbol{\hat \varphi}$ follows from $\boldsymbol f \underset{P}{\equiv} \boldsymbol{\hat f}$. As noted above, this is not guaranteed to be the case, as $\boldsymbol C$ is not generally invertible (\eg in the presence of occlusions). We, therefore, need to consider when a unique reconstruction of the component functions $\boldsymbol \varphi$ (and correspondingly $\boldsymbol{\hat \varphi}$) is possible, based on only the observations $\boldsymbol x = \boldsymbol f(\boldsymbol z)$ on $Q$.

As explained in the main paper, we can reason about how a change in the latents $\boldsymbol z_k$ of some slot affects the final output, which we can express through the chain rule as
\begin{equation}\label{eq:chain_rule}
    {\color{gray}\underbrace{\color{black} \frac{\partial \boldsymbol f}{\partial \boldsymbol z_k}(\boldsymbol z) }_{N \times D}}
    = {\color{gray}\underbrace{\color{black} \frac{\partial \boldsymbol C}{\partial \boldsymbol \varphi_k}\big(\boldsymbol \varphi(\boldsymbol z)\big) }_{N \times M}}
    {\color{gray}\underbrace{\color{black} \frac{\partial \boldsymbol \varphi_k}{\partial \boldsymbol z_k}(\boldsymbol z_k) }_{M \times D}} \quad\forall k \in [K].
\end{equation}
Here, $N$ is the dimension of the final output (\eg $64 \times 64 \times 3$ for RGB images), $M$ is the dimension of a component's representation $\boldsymbol{\tilde x}_k$ (\eg also $64 \times 64 \times 3$ for RGB images), and $D$ is the dimension of a component's latent description $\boldsymbol z_k$ (\eg 5: x-position, y-position, shape, size, hue for sprites).
Note that we can look at the derivative component-wise because each \textit{component function} $\boldsymbol \varphi_k$ only depends on the latents $\boldsymbol z_k$ of its component. However, the \textit{combination function} still depends on the (hidden) representation of all components, and therefore $\frac{\partial \boldsymbol C}{\partial \boldsymbol \varphi_k}$ is a function of all $\boldsymbol \varphi$ and the entire $\boldsymbol z$.

In equation~\ref{eq:chain_rule}, the left-hand side (LHS) $\frac{\partial \boldsymbol f}{\partial \boldsymbol z_k}$ can be computed from the training, as long as $\supp P$ is an open set. On the right-hand side (RHS), the functional form of $\frac{\partial \boldsymbol C}{\partial \boldsymbol \varphi_k}$ is known since $\boldsymbol C$ is given, but since $\boldsymbol \varphi(\boldsymbol z)$ is still unknown, the exact entries of this Jacobian matrix are unknown. As such, equation~\ref{eq:chain_rule} defines a system of partial differential equations (PDEs) for the set of component functions $\boldsymbol \varphi$ with independent variables $\boldsymbol z$.

Before we can attempt to solve this system of PDEs, we simplify it by isolating $\frac{\partial \boldsymbol \varphi_k}{\partial \boldsymbol z_k}$. Since all terms are matrices, this is equivalent to solving a system of linear equations. For $N = M$, $\frac{\partial \boldsymbol C}{\partial \boldsymbol \varphi_k}$ is square, and we can solve by taking its inverse as long as the determinant is not zero. In the general case of $N \geq M$, however, we have to resort to the pseudoinverse to write
\begin{equation}\label{eq:LSE_solution}
    \frac{\partial \boldsymbol \varphi_k}{\partial \boldsymbol z_k}^*
    = \left( \frac{\partial \boldsymbol C}{\partial \boldsymbol \varphi_k}^\top \frac{\partial \boldsymbol C}{\partial \boldsymbol \varphi_k} \right)^{-1} \frac{\partial \boldsymbol C}{\partial \boldsymbol \varphi_k}^\top
    \frac{\partial \boldsymbol f}{\partial \boldsymbol z_k}
    \quad\forall k \in [K],
\end{equation}
which gives all solutions $\frac{\partial \boldsymbol \varphi_k}{\partial \boldsymbol z_k}^*$ if any exist. This system is overdetermined, and a (unique) solution exists if $\frac{\partial \boldsymbol C}{\partial \boldsymbol \varphi_k}$ has full (column) rank. In other words, to execute this simplification step on $P$, we require that for all $\boldsymbol z \in P$ the $M$ column vectors of the form
\begin{equation}\label{eq:column}
    \left( \frac{\partial C_1}{\partial \varphi_{km}}\big(\boldsymbol \varphi(\boldsymbol z)\big), ..., \frac{\partial C_N}{\partial \varphi_{km}}\big(\boldsymbol \varphi(\boldsymbol z)\big) \right)^\top
    \quad\forall m \in [M]
\end{equation}
are linearly independent. Each entry of a column vector describes how all entries $C_n$ of the final output (\eg the pixels of the output image) change with a single entry $\varphi_{km}$ of the intermediate representation of component $k$ (\eg a single pixel of the component-wise image). It is easy to see that if even a part of the intermediate representation is not reflected in the final output (\eg in the presence of occlusions, when a single pixel of one component is occluded), the entire corresponding column is zero, and the matrix does not have full rank.

To circumvent this issue, we realize that the LHS of equation~\ref{eq:LSE_solution} only depends on the latents $\boldsymbol z_k$ of a single component. Hence, for a given latent $\boldsymbol z$ and a slot index $k$, the correct component function will have the same solution for all points in any (finite) set
\begin{equation}
    P'(\boldsymbol z, k) \subseteq \left\{ \boldsymbol p \in \supp P | \boldsymbol p_k = \boldsymbol z_k \right\}.
\end{equation}
We can interpret these points as the intersection of $P$ with a plane in latent space at $\boldsymbol z_k$ (\eg all latent combinations in the training set in which one component is fixed in a specific configuration).
We can then define a modified composition function $\boldsymbol{\tilde C}$ that takes $\boldsymbol z$ and a slot index $k$ as input and produces a ``superposition'' of images corresponding to the latents in the subset as
\begin{equation}\label{eq:superposition}
    \boldsymbol{\tilde C}\left( \boldsymbol z, k \right) = \sum_{\boldsymbol p \in P'(\boldsymbol z, k)} \boldsymbol C\big(\boldsymbol \varphi(\boldsymbol p) \big).
\end{equation}
Essentially, we are condensing the information from multiple points in the latent space into a single function.
This enables us to write a modified version of equation~\ref{eq:chain_rule} as
\begin{equation}
    \sum_{\boldsymbol p \in P'(\boldsymbol z, k)} \frac{\partial \boldsymbol f}{\partial \boldsymbol z_k}(\boldsymbol p)
    = \sum_{\boldsymbol p \in P'(\boldsymbol z, k)} \frac{\partial \boldsymbol C}{\partial \boldsymbol \varphi_k}\big(\boldsymbol \varphi(\boldsymbol p)\big)
    \frac{\partial \boldsymbol \varphi_k}{\partial \boldsymbol z_k}(\boldsymbol z_k)
    = \frac{\partial \boldsymbol{\tilde C}}{\partial \boldsymbol \varphi_k} (\boldsymbol z, k)
    \frac{\partial \boldsymbol \varphi_k}{\partial \boldsymbol z_k}(\boldsymbol z_k)
    \quad\forall k \in [K]
\end{equation}
Now we can solve for $\frac{\partial \boldsymbol \varphi_k}{\partial \boldsymbol z_k}$ as in equation~\ref{eq:LSE_solution}, but this time require only that $\frac{\partial \boldsymbol{\tilde C}}{\partial \boldsymbol \varphi_k}$ has full (column) rank for a unique solution to exist, \ie
\begin{equation}
    \rank \frac{\partial \boldsymbol{\tilde C}}{\partial \boldsymbol \varphi_k} (\boldsymbol z, k)
    = \sum_{\boldsymbol p \in P'(\boldsymbol z, k)} \frac{\partial \boldsymbol C}{\partial \boldsymbol \varphi_k}\big(\boldsymbol \varphi(\boldsymbol p)\big)
    = M \quad \forall \boldsymbol z \in P \quad \forall k \in [K].
\end{equation}
In general, this condition is easier to fulfill since full rank is not required in any one point but over a set of points. For occlusions, for example, any pixel of one slot can be occluded in some points $\boldsymbol p \in P'$, as long as it is not occluded in all of them. We can interpret this procedure as ``collecting sufficient information'' such that an inversion of the generally non-invertible $\boldsymbol C$ becomes feasible locally.

The requirement that $\supp P$ has to be an open set, together with the full rank condition on the Jacobian of the composition function condensed over multiple points, $\boldsymbol{\tilde C}$, is termed \textit{sufficient support} in the main paper (Definition~\ref{def:suff_support} and Assumption~\ref{ass:suff_support}). As explained here, this allows for the reconstruction of $\frac{\partial \boldsymbol \varphi_k}{\partial \boldsymbol z_k}$ from the observations, \ie
\begin{equation}\label{eq:f_p_to_dphi_p}
    \boldsymbol f \underset{P}{\equiv} \boldsymbol{\hat f} \implies \frac{\partial \boldsymbol \varphi}{\partial \boldsymbol z} \underset{P}{\equiv} \frac{\partial \boldsymbol{\hat \varphi}}{\partial \boldsymbol z}.
\end{equation}

\proofpart\label{pr:dphi_p_to_phi_p}
The above step only gives us agreement of the \textit{derivative} of the component functions, $\frac{\partial \boldsymbol \varphi_k}{\partial \boldsymbol z_k}$, not agreement of the functions themselves. As explained above, the solution to the linear system of equations~\ref{eq:LSE_solution} constitutes a system of partial differential equations (PDEs) in the set of component functions $\boldsymbol \varphi$ with independent variables $\boldsymbol z$. We can see that this system has the form
\begin{equation}
    \partial_i \boldsymbol \varphi(\boldsymbol z) = \boldsymbol a_i\big(\boldsymbol z, \boldsymbol \varphi(\boldsymbol z)\big),
\end{equation}
where $i \in [L] = [K\times D]$ is an index over the flattened dimensions $K$ and $D$ such that $\partial_i \boldsymbol \varphi$ denotes $\frac{\partial \boldsymbol\varphi}{\partial z_{L}}$ (which is essentially one column of $\frac{\partial \boldsymbol \varphi_k}{\partial \boldsymbol z_k}$ aggregated over all $k$) and $\boldsymbol a_i$ is the combination of corresponding terms from the LHS. If this system allows for more than one solution, we cannot uniquely reconstruct the component functions from their derivatives.

If we have access to some initial point, however, for which we know $\boldsymbol \varphi(\boldsymbol 0) = \boldsymbol \varphi^0$, we can write
\begin{equation}\label{eq:PDE}
\begin{aligned}
    \boldsymbol \varphi(z_1, ..., z_{L}) - \boldsymbol \varphi^* = &\big( \varphi(z_1, ..., z_{L}) - \varphi(0, z_2, ..., z_{L}) \big) \\
    &+ \big( \varphi(0, z_2, ..., z_{L}) - \varphi(0, 0, z_3, ..., z_{L}) \big) \\
    &+ ... \\
    &+ \big( \varphi(0, ..., 0, z_{L}) - \varphi(0, ..., 0) \big).
\end{aligned}
\end{equation}

In each line of this equation, only a single $z_i =: t$ is changing; all other $z_1, ..., z_{L}$ are fixed. Any solution of \ref{eq:PDE}, therefore, also has to solve the $L$ ordinary differential equations (ODEs) of the form
\begin{equation}
    \partial_t \boldsymbol \varphi(z_1, ..., z_{i-1}, t, z_{i+1}, ..., z_{L}) = \boldsymbol a_i\big(z_1, ..., z_{i-1}, t, z_{i+1}, ..., z_{L}, \boldsymbol \varphi(z_1, ..., z_{i-1}, t, z_{i+1}, ..., z_{L}) \big),
\end{equation}
which have a unique solution if $\boldsymbol a_i$ is Lipschitz in $\boldsymbol \varphi$ and continuous in $z_i$, as guaranteed by \ref{ass:well_behaved}. Therefore, \ref{eq:PDE} has at most one solution.
This reference point does not have to be in $\boldsymbol z = \boldsymbol 0$, as a simple coordinate transform will yield the same result for any point in $P$.
It is therefore sufficient that there exists \textit{some} point $\boldsymbol p^0 \in P$ for which $\boldsymbol \varphi (\boldsymbol p^0) = \boldsymbol{\hat \varphi}(\boldsymbol p^0)$ to obtain the same unique solution for $\boldsymbol \varphi$ and $\boldsymbol{\hat \varphi}$, which is exactly what \ref{ass:initial_point} states. Overall, this means that agreement of the derivatives of the component functions also implies agreement of the component functions themselves, \ie
\begin{equation}\label{eq:dphi_p_to_phi_p}
   \frac{\partial \boldsymbol \varphi}{\partial \boldsymbol z} \underset{P}{\equiv} \frac{\partial \boldsymbol{\hat \varphi}}{\partial \boldsymbol z} \implies \boldsymbol \varphi \underset{P}{\equiv} \boldsymbol{\hat \varphi}
\end{equation}

\proofpart
Finally, we can conclude the model $\boldsymbol{\hat f}$ fitting the ground-truth generating process $\boldsymbol f$ on the training distribution $P$, through \ref{eq:f_p_to_dphi_p}, \ref{eq:dphi_p_to_phi_p}, \ref{eq:phi_p_to_phi_q}, \ref{eq:phi_q_to_f_q}, implies the model generalizing to $Q$ as well. In other words, equation~\ref{eq:goal} holds.

\end{proof}

\section{Details about the compositional functions}\label{app:theory_details}
As explained in equation~\ref{eq:add_sigmoid} in section~\ref{sec:experiments}, the composition function is implemented as a soft pixel-wise addition in most experiments. The use of the sigmoid function $\sigma(\cdot)$ in the composition
\begin{equation}
    \boldsymbol x = \sigma(\boldsymbol{\tilde x}_1) \cdot \boldsymbol{\tilde x}_1 + \sigma(-\boldsymbol{\tilde x}_1) \cdot \boldsymbol{\tilde x}_2
\end{equation}
was necessary for training stability. With this formulation, sprites can also overlap somewhat transparently, which is not desired and leads to small reconstruction artifacts for some specific samples. Implementing the composition with a step function as
\begin{equation}
    \boldsymbol x = \operatorname{step}(\boldsymbol{\tilde x}_1) \cdot \boldsymbol{\tilde x}_1 + \operatorname{step}(-\boldsymbol{\tilde x}_1) \cdot \boldsymbol{\tilde x}_2
\end{equation}
instead would be more faithful to the ground-truth data-generating process, but is hard to train with gradient descent.

Note that both formulations could easily be extended to more than one sprite by simply repeating the composition operation with any additional sprite.

In section~\ref{sec:experiments}, we also looked at a model that implements the composition through alpha compositing instead (see also Table~\ref{tab:experiments}, \textbf{\#11}). Here, each component's intermediate representation is an RGBa image. The components are then overlaid on an opaque black background using the composition function
\begin{align}
    x_\alpha &= x_{1, \alpha} + \left( 1 - x_{1, \alpha} \right) \cdot x_{2, \alpha} \\
    x_\text{RGB} &= x_{1, \alpha} \cdot x_{1, \text{RGB}} + \left( 1 - x_{1, \alpha} \right) \cdot \frac{x_{2, \alpha}}{x_\alpha} \cdot x_{2, \text{RGB}}.
\end{align}
While this yields a compositional function, the sufficient support condition (Definition~\ref{def:suff_support}) is generally not fulfilled on the sprites data. The reason is that in fully transparent pixels ($\alpha = 0$), changing the RGB value is not reflected in the output. Conversely, if a pixel is black, changing its alpha value will not affect how it is blended over a black background. As a result, most columns in the Jacobian $\frac{\partial \boldsymbol C}{\partial \boldsymbol \varphi_k}$ (see also equation~\ref{eq:column}) will be zero. Since the intermediate representations of each sprite will contain a lot of black or transparent pixels (the entire background), the rank of the Jacobian here will be low. In this case, the workaround from equation~\ref{eq:superposition} does not help since the low rank is not a result of another component in the foreground but of the specific parameterization of each component itself.

As stated in the main paper, the fact that this parameterization still produces good results and generalizes well is an indicator that there might be another proof strategy or workaround that avoids this specific issue.

\end{document}